\newcommand{\argmax}{\operatornamewithlimits{argmax}}
\newcommand{\argmin}{\operatornamewithlimits{argmin}}
\def \cX {{\cal X}}
\def \cD {{\cal D}}
\newtheorem{thm}{Theorem}[section]
\newtheorem{lem}[thm]{Lemma}
\title{Efficient Learning by Directed Acyclic Graph For Resource Constrained Prediction}
\author{
Joseph Wang\\
Department of Electrical\\
 \& Computer Engineering\\
Boston University,\\ 
Boston, MA 02215 \\
\texttt{joewang@bu.edu} 
\And
Kirill Trapeznikov \\
Systems \& Technology Research \\
Woburn, MA 01801 \\
\texttt{kirill.trapeznikov@stresearch.com} 
\And
Venkatesh Saligrama\\
Department of Electrical\\
 \& Computer Engineering\\
Boston University,\\ 
Boston, MA 02215 \\
\texttt{srv@bu.edu} 
}
\begin{document}
\maketitle
\begin{abstract}
We study the problem of reducing test-time acquisition costs in classification systems. Our goal is to learn decision rules that adaptively select sensors for each example as necessary to make a confident prediction. We model our system as a directed acyclic graph (DAG) where internal nodes correspond to sensor subsets and decision functions at each node choose whether to acquire a new sensor or classify using the available measurements. This problem can be naturally posed as an empirical risk minimization over training data. Rather than jointly optimizing such a highly coupled and non-convex problem over all decision nodes, we propose an efficient algorithm motivated by dynamic programming. We learn node policies in the DAG by reducing the global objective to a series of cost sensitive learning problems. Our approach is computationally efficient and has proven guarantees of convergence to the optimal system for a fixed architecture. In addition, we present an extension to map other budgeted learning problems with large number of sensors to our DAG architecture and demonstrate empirical performance exceeding state-of-the-art algorithms for data composed of both few and many sensors.
\end{abstract}
\section{Introduction}
Many scenarios involve classification systems constrained by measurement acquisition budget. In this setting, a collection of sensor modalities with varying costs are available to the decision system. Our goal is to learn adaptive decision rules from labeled training data that, when presented with an unseen example, would select the most informative and cost-effective acquisition strategy for this example. In contrast, non-adaptive methods \cite{xu2012greedy} attempt to identify a common sparse subset of sensors that can work well for all data. Our goal is an adaptive method that can classify typical cases using inexpensive sensors while using expensive sensors only for atypical cases.

We propose an adaptive sensor acquisition system learned using labeled training examples. The system, modeled as a directed acyclic graph (DAG), is composed of internal nodes, which contain decision functions, and a single sink node (the only node with no outgoing edges), representing the terminal action of stopping and classifying (SC). At each internal node, a decision function routes an example along one of the outgoing edges. Sending an example to another internal node represents acquisition of a previously unacquired sensor, whereas sending an example to the sink node indicates that the example should be classified using the currently acquired set of sensors. The goal is to learn these decision functions such that the expected error of the system is minimized subject to an expected budget constraint.

First, we consider the case where the number of sensors available is small (as in \cite{trapeznikov:2013b,wang2014lp,wang2014model}), though the dimensionality of data acquired by each sensor may be large (such as an image taken in different modalities). In this scenario, we construct a DAG that allows for sensors to be acquired in any order and classification to occur with any set of sensors. In this regime, we propose a novel algorithm to learn node decisions in the DAG by emulating dynamic programming (DP). In our approach, we decouple a complex sequential decision problem into a series of tractable cost-sensitive learning subproblems. Cost-sensitive learning (CSL) generalizes multi-decision learning by allowing decision costs to be data dependent \cite{langford2007filtertree}. Such reduction enables us to employ computationally efficient CSL algorithms for iteratively learning node functions in the DAG. In our theoretical analysis, we show that, given a fixed DAG architecture, the policy risk learned by our algorithm converges to the Bayes risk as the size of the training set grows.

Next, we extend our formulation to the case where a large number of sensors exist, but the number of distinct sensor subsets that are necessary for classification is small (as in \cite{xu2013cost,kusner2014feature} where the depth of the trees is fixed to $5$). For this regime, we present an efficient subset selection algorithm based on sub-modular approximation. We treat each sensor subset as a new ``sensor,'' construct a DAG over unions of these subsets, and apply our DP algorithm. Empirically, we show that our approach outperforms state-of-the-art methods in both small and large scale settings.

\noindent
{\bf Related Work:} There is an extensive literature on adaptive methods for sensor selection for reducing test-time costs. It arguably originated with detection cascades (see \cite{zhang:2010,chen:2012} and references therein), a popular method in reducing computation cost in object detection for cases with highly skewed class imbalance and generic features. Computationally cheap features are used at first to filter out negative examples and more expensive features are used in later stages.

Our technical approach is closely related to Trapeznikov et al. \cite{trapeznikov:2013b} and Wang et al. \cite{wang2014lp,wang2014model}. Like us they formulate an ERM problem and generalize detection cascades to classifier cascades and trees and handle balanced and/or multi-class scenarios. Trapeznikov et al. \cite{trapeznikov:2013b} propose a similar training scheme for the case of cascades, however restrict their training to cascades and simple decision functions which require alternating optimization to learn. Alternatively, Wang et al. \cite{nips_paper,wang2014lp,wang2014model} attempt to jointly solve the decision learning problem by formulating a linear upper-bounding surrogate, converting the problem into a linear program (LP).

\begin{wrapfigure}{R}{0.4\linewidth}
\vspace*{-30pt}
   \begin{center}
    \includegraphics[width=\linewidth]{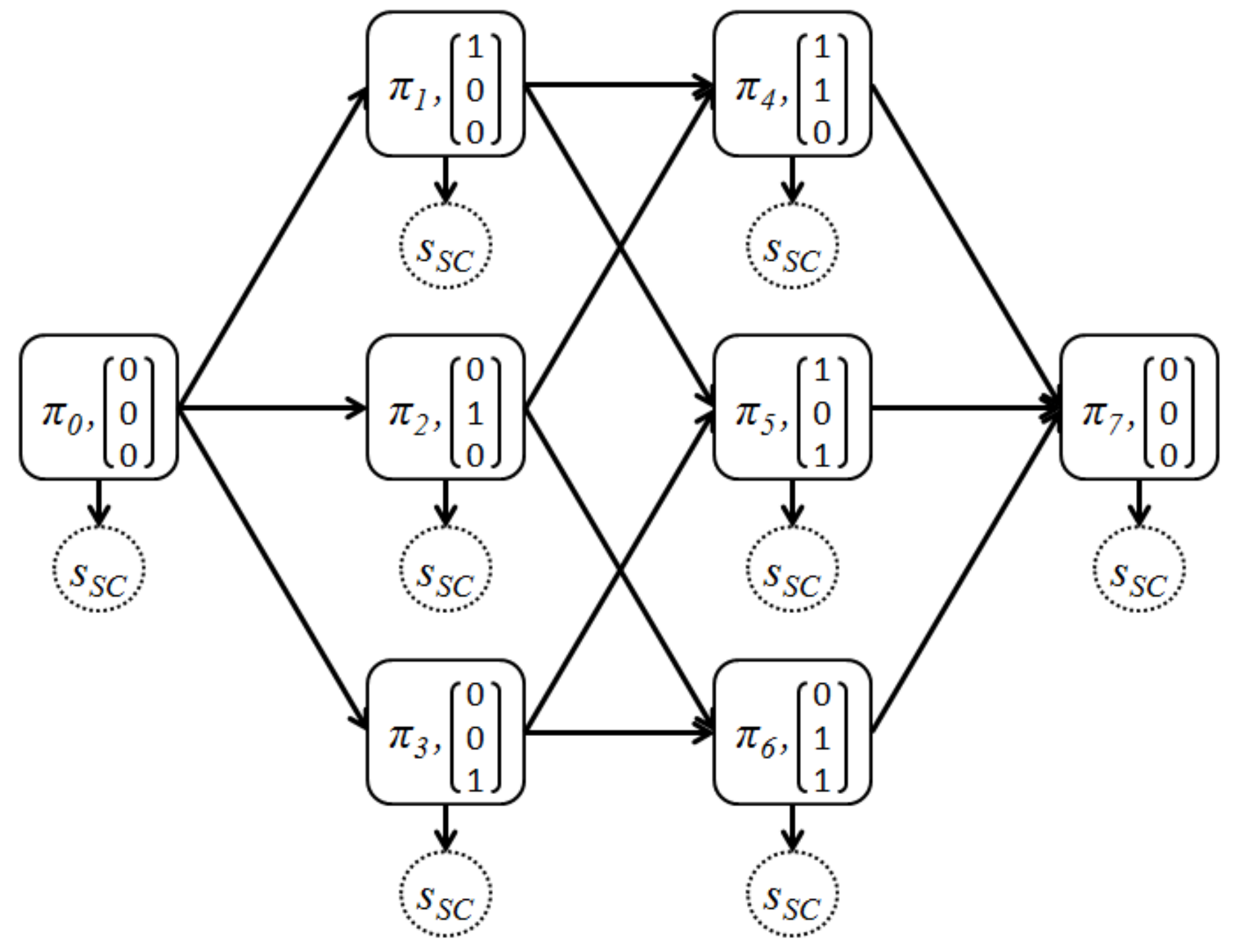}
  \end{center}
\vspace*{-16pt}
  \caption[Example of budget DAG.]{\small A simple example of a sensor selection DAG for a three sensor system. At each state, represented by a binary vector indicating measured sensors, a policy $\pi$ chooses between either adding a new sensor or stopping and classifying. Note that the state $s_{SC}$ has been repeated for simplicity.}
  \label{fig:DAG_example}
  \vspace*{-15pt}
\end{wrapfigure}

Conceptually, our work is closely related to Xu et al.~\cite{xu2013cost} and Kusner et al.\cite{kusner2014feature}, who introduce Cost-Sensitive Trees of Classifiers (CSTC) and Approximately Submodular Trees of Classifiers (ASTC), respectively, to reducing test time costs. Like our paper they propose a global ERM problem. They solve for the tree structure, internal decision rules and leaf classifiers jointly using alternative minimization techniques. Recently, Kusner et al.\cite{kusner2014feature} propose Approximately Submodular Trees of Classifiers (ASTC), a variation of CSTC which provides robust performance with significantly reduced training time and greedy approximation, respectively. Recently, Nan et al. \cite{icml2015_nan15} proposed random forests to efficiently learn budgeted systems using greedy approximation over large data sets.



The subject of this paper is broadly related to other adaptive methods in the literature. Generative methods \cite{Sheng06featurevalue,ji:2007,kanani:2008,gao2011active} pose the problem as a POMDP, learn conditional probability models, and myopically select features based information gain of unknown features.
MDP-based methods~\cite{dulac2011datum,karayev2013dynamic,he2012imitation,busa2012fast}
encode current observations as state, unused features as action space, and formulate various reward functions to account for classification error and costs. He et al. \cite{he2012imitation} apply imitation learning of a greedy policy with a single classification step as actions. Dulac-Arnold et al.~\cite{dulac2011datum} and Karayev et al.~\cite{karayev2013dynamic} apply reinforcement learning to solve this MDP. Benbouzid et al.\cite{busa2012fast} propose classifier cascades with an additional skip action within an MDP framework. Nan et al. \cite{nan2014fast} consider a nearest neighbor approach to feature selection, with classification confidence driven by the classification margin.

\vspace*{-11pt}
\section{Adaptive Sensor Acquisition by DAG}\label{section.dag}
In this section, we present our adaptive sensor acquisition DAG that during test-time sequentially decides which sensors should be acquired for every new example entering the system. 

Before formally describing the system and our learning approach, we first provide a simple illustration for a 3 sensor DAG shown in Fig. \ref{fig:DAG_example}. The state indicating acquired sensors is represented by a binary vector, with a 0 indicating that a sensor measurement has not been acquired and a 1 representing an acquisition. Consider a new example that enters the system. Initially, it has a state of $[0,0,0]^T$ (as do all samples during test-time) since no sensors have been acquired. It is routed to the policy function $\pi_0$, which makes a decision to measure one of the three sensors or to stop and classify. Let us assume that the function $\pi_0$ routes the example to the state $[1,0,0]^T$, indicating that the first sensor is acquired. At this node, the function $\pi_1$ has to decide whether to acquire the second sensor, acquire the third, or classifying using only the first. If $\pi_1$ chooses to stop and classify then this example will be classified using only the first sensor.

Such decision process is performed for every new example. The system adaptively collects sensors until the policy chooses to stop and classify (we assume that when all sensors have been collected the decision function has no choice but to stop and classify, as shown for $\pi_7$ in Fig. \ref{fig:DAG_example}). 

\subsection{Problem Formulation}\label{sec.problem_formulation}
A data instance, $x \in \cX$, consists of $M$ sensor measurements, $x=\{x^1,x^2, \ldots, x^M\}$, and belongs to one of $L$ classes indicated by its label $y \in \mathcal Y = \{1,2, \ldots L\}$. Each sensor measurement, $x^m$, is not necessarily a scalar but may instead be multi-dimensional. Let the pair, $(x,y)$, be distributed according to an unknown joint distribution $\cD$. Additionally, associated with each sensor measurement $x^m$ is an acquisition cost, $c_m$.

To model the acquisition process, we define a state space $\mathcal{S}=\{s_1,\ldots,s_K,s_{SC}\}$. The states $\{s_1,\ldots,s_K\}$ represent subsets of sensors, and the stop-and-classify state $s_{SC}$ represents the action of stopping and classifying with a current subset. Let $\cX_s$ correspond to the space of sensor measurements in subset $s$. We assume that the state space includes all possible sensor subsets\footnote{While enumerating all possible combinations is feasible for small $M$, for large $M$ this problem becomes intractable. We will overcome this limitation in Section 3 by applying a novel sensor selection algorithm. For now, we remain in the small $M$ regime.}, $K=2^M$. For example in Fig. \ref{fig:DAG_example}, the system contains all subsets of 3 sensors. We also introduce the state transition function, $\mathcal{T}: \mathcal{S}\rightarrow\mathcal{S}$, that defines a set of actions that can be taken from the current state. A transition from the current sensor subset to a new subset corresponds to an acquisition of new sensor measurements. A transition to the state $s_{SC}$ corresponds to stopping and classifying using the available information. This terminal state, $s_{SC}$, has access to a classifier bank which is used to predict the label of an example. Since classification has to operate on any sensor subset, there is one classifier for every $s_k$: $f_{s_1},\ldots,f_{s_K}$ such that $f_{s}: \cX_{s} \to \mathcal Y$. We assume such classifier bank is given and pre-trained. Practically, the classifiers can be either unique for each subset or a missing feature (i.e. sensor) capable classification system as in \cite{ICML2013_vandermaaten13}. We overload notation and use node, subset of sensors, and path leading upto that subset on the DAG interchangeably.
In particular we let {$\mathcal S$} denote the collection of subsets of nodes. Each subset is associated with a node on the DAG graph. We refer to each node as a state since it represents the ``state-of-information’’ for an instance arriving at that node.

Next, we define the loss associated with classifying an example/label pair $(x,y)$ using the sensors in $s_j$ as
\vspace*{-5pt}
\begin{align}\label{eq.subset_cost}
L_{s_j}(x,y)=\mathbbm{1}_{f_{s_j}(x) \neq y}+\sum_{k \in s_j} c_k.
\end{align}
Using this convention, the loss is the sum of the empirical risk associated with classifier $f_{s_j}$ and the cost of the sensors in the subset $s_j$. The expected loss over the data is defined
\vspace*{-5pt}
\begin{align}\label{eq.exp_loss}
\mathcal{L}_\mathcal{D}(\pi)=E_{x,y \sim \cD}\left[L_{\pi(x)}(x,y)\right].
\end{align}
Our goal is to find a policy which adaptively selects subsets for examples such that their average loss is minimized
\vspace*{-5pt}
\begin{align}\label{eq.opt_policy}
\min_{\pi \in \Pi} \mathcal{L}_\mathcal{D}(\pi),
\end{align}
where $\pi:\cX\rightarrow \mathcal{S}$ is a policy selected from a family of policies $\Pi$ and $\pi(x)$ is the state selected by the policy $\pi$ for example $x$.
We denote the quantity $\mathcal{L}_\mathcal{D}$ as the value of \eqref{eq.opt_policy} when $\Pi$ is the family of all measurable functions. $\mathcal{L}_\mathcal{D}$ is the \textit{Bayes cost}, representing the minimum possible cost for any function given the distribution of data. In practice, the distribution $\cD$ is unknown, and instead we are given training examples $(x_1,y_1),\ldots,(x_n,y_n)$ drawn I.I.D. from $\cD$. The problem becomes an empirical risk minimization:
\vspace*{-5pt}
\begin{align}\label{eq.policy_erm}
\min_{\pi \in \Pi} \sum_{i=1}^{n}L_{\pi(x_i)}(x_i,y_i).
\end{align}
Recall that our sensor acquisition system is represented as a DAG. Each node in a graph corresponds to a state (i.e. sensor subset) in $\mathcal{S}$, and the state transition function, $\mathcal T(s_j)$, defines the outgoing edges from every node $s_j$. We refer to the entire edge set in the DAG as $E$.
In such a system, the policy $\pi$ is parameterized by the set of decision functions $\pi_1,\ldots,\pi_K$ at every node in the DAG. Each function, $\pi_j:\cX\rightarrow \mathcal{T}(s_j)$, maps an example to a new state (node) from the set specified by outgoing edges. Rather than directly minimizing the empirical risk in \eqref{eq.policy_erm}, first, we define a step-wise cost associated with all edges $(s_j,s_k)\in E$
\vspace*{-5pt}
\begin{align}\label{eq.edge_costs}
C(x,y,s_j,s_k)=\begin{cases}
\sum_{t \in s_k \backslash s_j} c_{t} & \text{if } s_k \neq s_{SC}\\
\mathbbm{1}_{f_{s_j}(x)\neq y} & \text{otherwise}
\end{cases}.
\end{align}
$C(\cdot)$ is either the cost of acquiring new sensors or is the classification error induced by classifying with the current subset if $s_k=s_{SC}$. Using this step-wise cost, we define the empirical loss of the system w.r.t a path for an example $x$:
\vspace*{-12pt}
\begin{align}\label{eq.dag_risk}
R\left(x,y,\pi_1,...,\pi_K\right)=\sum_{(s_j,s_{j+1})~\in~\text{path}\left(x,\pi_1,...,\pi_K\right)} C\left(x,y,s_j,s_{j+1}\right),
\end{align}
where $\text{path}\left(x,\pi_1,\ldots,\pi_K\right)$ is the path on the DAG induced by the policy functions $\pi_1,\ldots,\pi_K$ for example $x$. The empirical minimization equivalent to \eqref{eq.policy_erm} for our DAG system is a sample average over all example specific path losses:
\vspace*{-12pt}
\begin{align}\label{eq.dag_erm}
\pi_1^*,\ldots,\pi_K^*=\argmin_{\pi_1,\ldots,\pi_K \in \Pi} \sum_{i=1}^{n}R\left(x_i,y_i,\pi_1,\ldots,\pi_K\right).
\end{align}
Next, we present a reduction to efficiently learn the functions $\pi_1,\ldots,\pi_K$ that minimize the empirical loss in \eqref{eq.dag_erm}.

\subsection{Learning Policies in a DAG}\label{DAG_alg_section}
Learning the functions $\pi_1,\ldots, \pi_K$ that minimize the cost in \eqref{eq.dag_erm} is a highly coupled problem. Learning a decision function $\pi_j$ is dependent on the other functions in two ways: ({\bf a}) $\pi_j$ is dependent on functions at nodes downstream (nodes for which a path exists from $\pi_j$), as these determine the cost of each action taken by $\pi_j$ on an individual example (the cost-to-go), and ({\bf b}) $\pi_j$ is dependent on functions at nodes upstream (nodes for which a path exists to $\pi_j$), as these determine the distribution of examples that $\pi_j$ acts on.
\begin{wrapfigure}{R}{0.57\linewidth}
\begin{minipage}{\linewidth}
\vspace*{-30pt}
\begin{algorithm}[H]
    \begin{algorithmic}
   \STATE {\bfseries Input:} Data: $(x_i,y_i)_{i=1}^n$,
   \STATE DAG: (nodes $\mathcal{S}$, edges $E$, costs $C(x_i,y_i,e), \forall e \in E$), 
   \STATE CSL alg: $Learn\left((x_1,\vec{w}_1),\ldots,(x_n,\vec{w}_n))\right)\rightarrow \pi(\cdot)$
   \WHILE{ Graph $\mathcal{S}$ is NOT empty}
   \STATE ({\bf1}) Choose a node, $j \in \mathcal{S}$, s.t. all children of $j$ are leaf nodes
   \FOR{example $i \in \{1,\ldots,n\}$}
   \STATE ({\bf2}) Construct the weight vector $\vec{w}_i$ of edge costs per action.
   \ENDFOR
   \STATE ({\bf3}) $\pi_j \leftarrow Learn\left((x_1,\vec{w}_1),\ldots,(x_n,\vec{w}_n) \right )$
   \STATE ({\bf4}) Evaluate $\pi_j$ and update edge costs to node $j$:
   \STATE $C(x_i,y_i,s_n,s_j) \leftarrow \vec{w}^j_i\left(\pi_j(x_i)\right)+C(x_i,y_i,s_n,s_j)$
   \STATE ({\bf5}) Remove all outgoing edges from node $j$ in $E$
   \STATE ({\bf6}) Remove all disconnected nodes from $\mathcal{S}$.
   \ENDWHILE
   \STATE {\bfseries Output:} Policy functions, $\pi_1,\ldots,\pi_K$
    \end{algorithmic}
    \caption{Graph Reduce Algorithm}
    \label{GR_algorithm}
\end{algorithm}
\end{minipage}
\vspace*{-4ex}
\end{wrapfigure}
Consider a policy $\pi_j$ at a node corresponding to state $s_j$ such that all outgoing edges from $j$ lead to leaves. Also, we assume all examples pass through this node $\pi_j$ (we are ignoring the effect of upstream dependence {\bf b}). This yields the following important lemma:
\begin{lem}\label{lem.cs_reduction} Given the assumptions above, 
the problem of minimizing the risk in \eqref{eq.dag_risk} w.r.t a single policy function, $\pi_j$, is equivalent to solving a k-class cost sensitive learning (CSL) problem.\footnote{We consider the k-class CSL problem formulated by Beygelzimer et al. \cite{langford2007filtertree}, where an instance of the problem is defined by a distribution $D$ over $\mathcal{X}\times [0,\inf)^k$, a space of features and associated costs for predicting each of the $k$ labels for each realization of features. The goal is to learn a function which maps each element of $\mathcal{X}$ to a label $\{1,\ldots,k\}$ s.t. the expected cost is minimized.}
\end{lem}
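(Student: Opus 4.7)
The plan is to show that, under the two stated assumptions, the $\pi_j$-dependent part of the empirical risk in \eqref{eq.dag_erm} is exactly a weighted sum of per-example action costs, which matches the definition of an empirical CSL objective in the sense of Beygelzimer et al.

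\emph{Step 1 (decompose the path risk).} For each training example $i$, the path induced by $(\pi_1,\ldots,\pi_K)$ passes through $s_j$ by the all-examples-through-$j$ assumption. Since every child of $s_j$ is a leaf (either $s_{SC}$ itself or a node whose outgoing edges have been pruned in a previous iteration of Algorithm \ref{GR_algorithm}), the path terminates at the child selected by $\pi_j$. Hence I can write
\[
R(x_i,y_i,\pi_1,\ldots,\pi_K) \;=\; R^{\text{pre}}_i \;+\; C\bigl(x_i,y_i,s_j,\pi_j(x_i)\bigr),
\]
where $R^{\text{pre}}_i$ sums edge costs from the root to $s_j$ and does not depend on $\pi_j$ (the upstream policies are held fixed, and by assumption every $x_i$ reaches $s_j$ regardless of $\pi_j$).

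\emph{Step 2 (reduce to CSL).} Summing over $i$ and dropping the $\pi_j$-independent term $\sum_i R^{\text{pre}}_i$, the $\pi_j$-dependent portion of \eqref{eq.dag_erm} is $\sum_{i=1}^n C(x_i,y_i,s_j,\pi_j(x_i))$. Let $k=|\mathcal{T}(s_j)|$ and define a weight vector $\vec{w}_i \in [0,\infty)^k$ by $w_i(a)=C(x_i,y_i,s_j,s_a)$ for each outgoing action $a \in \mathcal{T}(s_j)$. Then minimizing \eqref{eq.dag_erm} over $\pi_j\in\Pi$ (with all other policies fixed) is equivalent to
\[
\min_{\pi_j \in \Pi}\;\sum_{i=1}^n w_i\bigl(\pi_j(x_i)\bigr),
\]
which is precisely the empirical $k$-class CSL objective on the labeled dataset $\{(x_i,\vec{w}_i)\}_{i=1}^n$, with $\pi_j$ playing the role of the classifier mapping features to one of $k$ actions.

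\emph{Main obstacle.} The only nontrivial point is the interpretation of ``child is a leaf'' within Algorithm \ref{GR_algorithm}: at the iteration when $j$ is processed, some children may have been \emph{made} leaves in earlier iterations by removing their outgoing edges and folding their realized downstream cost back into the incoming edge cost via Step (4). I would verify that this fold-in yields exactly the cost-to-go of taking action $a$ at $s_j$, so that $w_i(a)$ represents the full remaining cost rather than a one-step cost. Once that bookkeeping is in place, Step 2 is immediate and no further analytic machinery is needed.
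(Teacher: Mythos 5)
Your proposal is correct and follows essentially the same route as the paper: it isolates the $\pi_j$-dependent part of the path risk as $C(x,y,s_j,\pi_j(x))=\sum_{s_k\in\mathcal{T}(s_j)}C(x,y,s_j,s_k)\mathbbm{1}_{\pi_j(x)=s_k}$ and identifies this with the empirical CSL objective over the label set $\mathcal{T}(s_j)$ with per-example cost vectors $\vec{w}_i$. Your extra remarks — that the upstream term $R^{\text{pre}}_i$ is constant under the all-examples-through-$j$ assumption, and that children made into leaves by Algorithm \ref{GR_algorithm} carry their realized cost-to-go folded into the edge cost via step ({\bf 4}) — are exactly the bookkeeping the paper handles implicitly, so nothing is missing.
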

\begin{proof}
\vskip-2ex
Consider the risk in \eqref{eq.dag_risk} with $\pi_j$ such that all outgoing edges from $j$ lead to a leaf. Ignoring the effect of other policy functions upstream from $j$, the risk w.r.t $\pi_j$ is:
\vspace*{-12pt}
\[
R(x,y,\pi_j)=\sum_{s_k \in \mathcal{T}(s_j)}C(x,y,s_j,s_k)\mathbbm{1}_{\pi_j(x)=s_k} \rightarrow \min_{\pi \in \Pi} \sum_{i=1}^{n} R(x_i, y_i, \pi_j).
\]
Minimizing the risk over training  examples yields the optimization problem on the right hand side.
This is equivalent to a CSL problem over the space of ``labels'' $\mathcal{T}(s_j)$ with costs given by the transition costs $C(x,y,s_j,s_k)$. 
\end{proof}

In order to learn the policy functions $\pi_1,\ldots, \pi_K$, we propose Algorithm \ref{GR_algorithm}, which iteratively learns policy functions using Lemma \ref{lem.cs_reduction}. We solve the CSL problem by using  a filter-tree scheme \cite{langford2007filtertree} for $Learn$, which constructs a tree of binary classifiers. Each binary classifier can be trained using  regularized risk minimization. For concreteness we define the $Learn$ algorithm as
\vspace*{-6pt}
\begin{align} \label{eq.reglearn}
Learn&\left((x_1,\vec{w}_1),\ldots,(x_n,\vec{w}_n)\right) \notag\\
\triangleq FilterTree&((x_1,\vec{w}_1),\ldots,(x_n,\vec{w}_n)),
\end{align}
where the binary classifiers in the filter tree are trained using an appropriately regularized calibrated convex loss function. Note that multiple schemes exist that map the CSL problem to binary classification.

A single iteration of Algorithm \ref{GR_algorithm} proceeds as follows: ({\bf1}) A node $j$ is chosen whose outgoing edges connect only to leaf nodes. ({\bf2}) The costs associated with each connected leaf node are found. ({\bf3}) The policy $\pi_j$ is  trained on the entire set of training data according to these costs by solving a CSL problem. ({\bf4}) The costs associated with taking the action $\pi_j$ are computed for each example, and the costs of moving to state $j$ are updated. ({\bf5}) Outgoing edges from node $j$ are removed (making it a leaf node), and ({\bf6}) disconnected nodes (that were previously connected to node $j$) are removed. The algorithm iterates through these steps until all edges have been removed. We denote the policy functions trained on the empirical data using Alg. \ref{GR_algorithm} as $\pi_1^n,\ldots,\pi_K^n$.

\subsection{Analysis}
Our goal is to show that the expected risk of the policy functions $\pi_1,\ldots,\pi_K$ learned by Alg. \ref{GR_algorithm} converge to the Bayes risk. We first state our main result:
\begin{thm}\label{DAG_alg_consistent}
Alg. \ref{GR_algorithm} is universally consistent, that is
\vspace*{-4pt}
\begin{align}
\lim_{n\rightarrow\infty}\mathcal{L}_{\mathcal{D}}(\pi_1^n,\ldots,\pi_K^n)\rightarrow \mathcal{L}_{\mathcal{D}}
\end{align}
where $\pi_1^n,\ldots,\pi_K^n$ are the policy functions learned using Alg.~\eqref{GR_algorithm}, which in turn uses $Learn$ described by Eq.~\ref{eq.reglearn}.
\end{thm}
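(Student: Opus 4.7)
The plan is to prove universal consistency by induction on the order in which Algorithm \ref{GR_algorithm} processes nodes, i.e.\ in reverse topological order starting from the terminal state $s_{SC}$ and working back toward the root. Conceptually the algorithm is a sample-based backward dynamic program on the DAG, so two ingredients are needed: (i) Bellman optimality, which says backward DP using the exact cost-to-go recovers the minimizer of \eqref{eq.policy_erm}; and (ii) a per-node statement that the sample CSL problem solved by \eqref{eq.reglearn} is consistent, so that its output converges to the Bayes-optimal action at that node as $n\to\infty$, provided the edge costs used in step (2) converge to the true expected cost-to-go.

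For the base case, take any node $s_j$ all of whose outgoing edges lead to leaves. By Lemma \ref{lem.cs_reduction}, training $\pi_j$ is equivalent to a CSL problem whose costs $C(x,y,s_j,s_k)$ are available exactly, being either a deterministic acquisition cost $\sum_{t\in s_k\setminus s_j} c_t$ or the observed classification loss $\mathbbm{1}_{f_{s_j}(x)\ne y}$. The filter-tree reduction of Beygelzimer et al.\ \cite{langford2007filtertree} expresses this CSL problem as a collection of importance-weighted binary classification problems; because each internal binary classifier is trained by regularized minimization of a calibrated convex surrogate, standard surrogate-consistency results (Bayes consistency of the binary classifier plus the filter-tree regret transform) yield that the expected CSL cost of $\pi_j^n$ converges to the Bayes CSL cost at $s_j$.

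For the inductive step, suppose consistency has been established at every node strictly downstream of $s_j$. In step (4) the algorithm updates $C(x_i,y_i,s_n,s_j)$ by adding $\vec{w}_i^{\,j}(\pi_j(x_i))$, the realized one-step-plus-future cost incurred by the already learned downstream policies. The costs fed into the CSL problem at $s_j$'s parent are therefore the sample-path cost-to-go under the current downstream policies. Decompose the excess CSL risk at the parent into (a) the excess risk it would incur if trained against the true Bayes cost-to-go, controlled by the base-case argument (filter-tree plus convex-surrogate consistency), and (b) a perturbation term bounded by the expected gap between the realized and the Bayes cost-to-go, which vanishes by the inductive hypothesis together with boundedness of the losses (sensor costs are fixed and classification loss is $0$--$1$). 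Iterating this reduction back to the root node and then invoking Bellman's principle gives $\mathcal{L}_\mathcal{D}(\pi_1^n,\ldots,\pi_K^n)\to\mathcal{L}_\mathcal{D}$.

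The main obstacle is exactly step (b) above: the CSL problem at an interior node uses costs that themselves depend on previously learned, and hence random, policies, so an off-the-shelf CSL consistency theorem does not apply directly. The cleanest way around this is the additive perturbation decomposition sketched above, relying on uniform boundedness of the effective cost vector $\vec w_i$ to convert inductive convergence of downstream policies into convergence of the CSL objective at the parent. A secondary subtlety is the ``ignoring upstream effects'' caveat of Lemma \ref{lem.cs_reduction}: because $Learn$ is applied to the full training set at every node rather than only to examples routed to $s_j$ by the learned upstream policies, the resulting $\pi_j^n$ is the optimal action selector at $s_j$ under the full marginal of $x$, which by Bellman optimality coincides with the $s_j$-action of the globally optimal DAG policy. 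This observation is what licenses the backward-DP scheme to yield a globally consistent system rather than only a per-node-optimal one.
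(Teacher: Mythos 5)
Your proposal follows essentially the same route as the paper's proof: backward induction from leaves to root, per-node consistency of the CSL subproblem via the filter-tree reduction with calibrated convex surrogates, convergence of the costs-to-go so that each upstream node is asymptotically trained against Bayes-optimal costs, and the upstream-support issue dispatched by the observation that the Bayes-optimal decision under the full marginal agrees pointwise with the optimal decision on the reduced support. The only difference is that you make explicit the additive perturbation decomposition needed because interior nodes are trained on costs that depend on random downstream policies — a step the paper's proof asserts directly ("$\pi_k^n$ is trained on the Bayes optimal costs-to-go as $n\rightarrow\infty$") without the bounded-loss argument you supply — so your write-up is, if anything, the more careful of the two.
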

Alg. \ref{GR_algorithm} emulates a dynamic program applied in an empirical setting. Policy functions are decoupled and trained from leaf to root conditioned on the output of descendant nodes.

To adapt to the empirical setting, we optimize at each stage over all examples in the training set. The key insight is the fact that universally consistent learners output optimal decisions over subsets of the space of data, that is they are locally optimal. To illustrate this point, consider a standard classification problem.  Let $\cX' \subset \cX$ be the support (or region) of examples induced by  upstream deterministic decisions. $d^*$ and $f^*$, Bayes optimal classifiers w.r.t the full space and subset, respectively, are equal on the reduced support:
\[   d^*(x) = \arg \min_{d} E \left [  \mathds{1}_{d(x) \neq y} | x \right ]  = f^*(x)=\arg \min_{f} E \left [  \mathds{1}_{f(x) \neq y}  |x,  x \in \cX' \subset \cX \right]~\forall~x~\in~\cX'.  \]
From this insight, we decouple learning problems while still training a system that converges to the Bayes risk. This can be achieved by training universally consistent CSL algorithms such as filter trees \cite{langford2007filtertree} that reduce the problem to binary classification. By learning consistent binary classifiers \cite{bartlett06,steinwart2005consistency}, the risk of the cost-sensitive function can be shown to converge to the Bayes risk \cite{langford2007filtertree}.

\begin{proof}
\vskip-3ex
({\it Theorem \ref{DAG_alg_consistent}}) The proof can be broken down into two steps. First, we show that training the policy function with no downstream policy functions decouples from other policies. Next, we show that sequentially learning policy functions from leaf to root leads to an optimal policy.

Consider first the node associated with state $s_j$ whose outgoing edges lead to leaves. Alg.~\ref{GR_algorithm} trains the policy $\pi_j^n$ over the entire training set using \eqref{eq.reglearn}. As \eqref{eq.reglearn} is a universally consistent algorithm, the $\pi_j^n$ converges to the optimal policy as the data grows:
\vspace*{-5pt}
$$
\lim_{n\rightarrow\infty}\mathbbm{E}_{x,y,\sim \mathcal{D}}\left[C(x,y,s_j,\pi_j^n(x))\right]\rightarrow \inf\{\mathbbm{E}_{x,y,\sim \mathcal{D}}\left[C(x,y,s_j,\pi^*_j(x))\right]|\pi^*_j:\mathcal{X}\rightarrow \mathcal{S}\}
$$
where the infimum is over any measurable function $\pi^*_j$. As this infimum is over any measurable function, we point out that this convergence holds for any realization $x \in \mathcal{X}$ 
$$
\mathbbm{E}_{y\sim \mathcal{D}(x)}\left[C(x,y,s_j,\pi_j^n(x))|x\right]\rightarrow \inf\{\mathbbm{E}_{y\sim \mathcal{D}(x)}\left[C(x,y,s_j,\pi^*_j(x))|x\right]|\pi^*_j:\mathcal{X}\rightarrow \mathcal{S}\}
$$
where $\mathcal{D}(x)$ is the distribution of $y$ conditioned on $x$. As the outgoing edges of node $s_j$ contain only leaves, other policy functions $\pi_1^n,\ldots,\pi_{j-1}^n,\pi_{j+1}^n,\ldots,\pi_K^n$ do not affect the conditional distribution $\mathcal{D}(x)$. Instead, they only reduce the support of $\mathcal{X}$ observed by $\pi_j^n$, and therefore $\pi_j^n$ converges to the Bayes optimal function independent of the other policy functions, and therefore the learned policy $\pi_j^n$ is fixed.

Alg. \ref{GR_algorithm} updates the edge costs (costs-to-go) of edges directed to $s_j$. These costs-to-go do not vary as we train new policy functions in ancestor nodes of $s_j$ and these values can be viewed as fixed when training the next policy function, $\pi_k^n$. The dependence on $\pi_j^n$ is well captured when learning $\pi_k^n$. As $\pi_j^n$ converges to the Bayes optimal function, the costs-to-go converge to the Bayes optimal values, $\pi_k^n$ is trained on the Bayes optimal costs-to-go as $n\rightarrow\infty$. 

By recursion, this implies that every decision function is learned on costs-to-go approaching the Bayes optimal, and therefore each function in the learned decision functions approach the point-wise Bayesian optimal decision. Consequently, the learned system approaches the Bayesian optimal system.
\end{proof}

\textbf{Computational Efficiency:} Alg. \ref{GR_algorithm} reduces the problem to solving a series of $O(KM)$ binary classification problems, where $K$ is the number of nodes in the DAG and $M$ is the number of sensors. Finding each binary classifier is computationally efficient, as it reduces to solving a convex problem with $O(n)$ variables. In contrast, nearly all previous approaches require solving a non-convex problem and resort to alternating optimization \cite{xu2013cost,trapeznikov:2013b} or greedy approximation \cite{kusner2014feature}. Alternatively, convex surrogates proposed for the global problem \cite{wang2014lp,wang2014model} require solving large convex programs with $\theta(n)$ variables, even for simple linear decision functions. Furthermore, existing off-the-shelf algorithms cannot be applied to train these systems, often leading to less efficient implementation in practice.

\subsection{Generalization to Other Budgeted Learning Problems}
Although, we presented our algorithm in the context of supervised classification and a uniform linear sensor acquisition cost structure, the above framework holds for a wide range of problems. In particular, any loss-based learning problem can be solved using the proposed DAG approach by generalizing the cost function
\begin{align}\label{eq.gen_edge_costs}
\tilde{C}(x,y,s_j,s_k)=\begin{cases}
c(x,y,s_j,s_k) & \text{if } s_k \neq s_{SC}\\
D\left(x,y,s_j\right) & \text{otherwise}
\end{cases},
\end{align}
where $c(x,y,s_j,s_k)$ is the cost of acquiring sensors in $s_k\backslash s_j$ for example $(x,y)$ given the current state $s_j$ and $D\left(x,y,s_j\right)$ is some loss associated with applying sensor subset $s_j$ to example $(x,y)$. This framework allows for significantly more complex budgeted learning problems to be handled. For example, the sensor acquisition cost, $c(x,y,s_j,s_k)$, can be object dependent and non-linear, such as increasing acquisition costs as time increases (which can arise in image retrieval problems, where users are less likely to wait as time increases). The cost $D\left(x,y,s_j\right)$ can include alternative costs such as $\ell_2$ error in regression, precision error in ranking, or model error in structured learning. As in the supervised learning case, the learning functions and example labels do not need to be explicitly known. Instead, the system requires only empirical performance to be provided, allowing complex decision systems (such as humans) to be characterized or systems learned where the classifiers and labels are sensitive information.

\section{Adaptive Sensor Acquisition in High-Dimensions}
So far, we considered the case where the DAG system allows for any subset of sensors to be acquired, however this is often computationally intractable as the number of nodes in the graph grows exponentially with the number of sensors. In practice, these complete systems are only feasible for data generated from a small set of sensors (~10 or less).
\subsection{Learning Sensor Subsets}
\begin{wrapfigure}{R}{0.4\linewidth}
\vspace*{-50pt}
   \begin{center}
    \includegraphics[width=\linewidth]{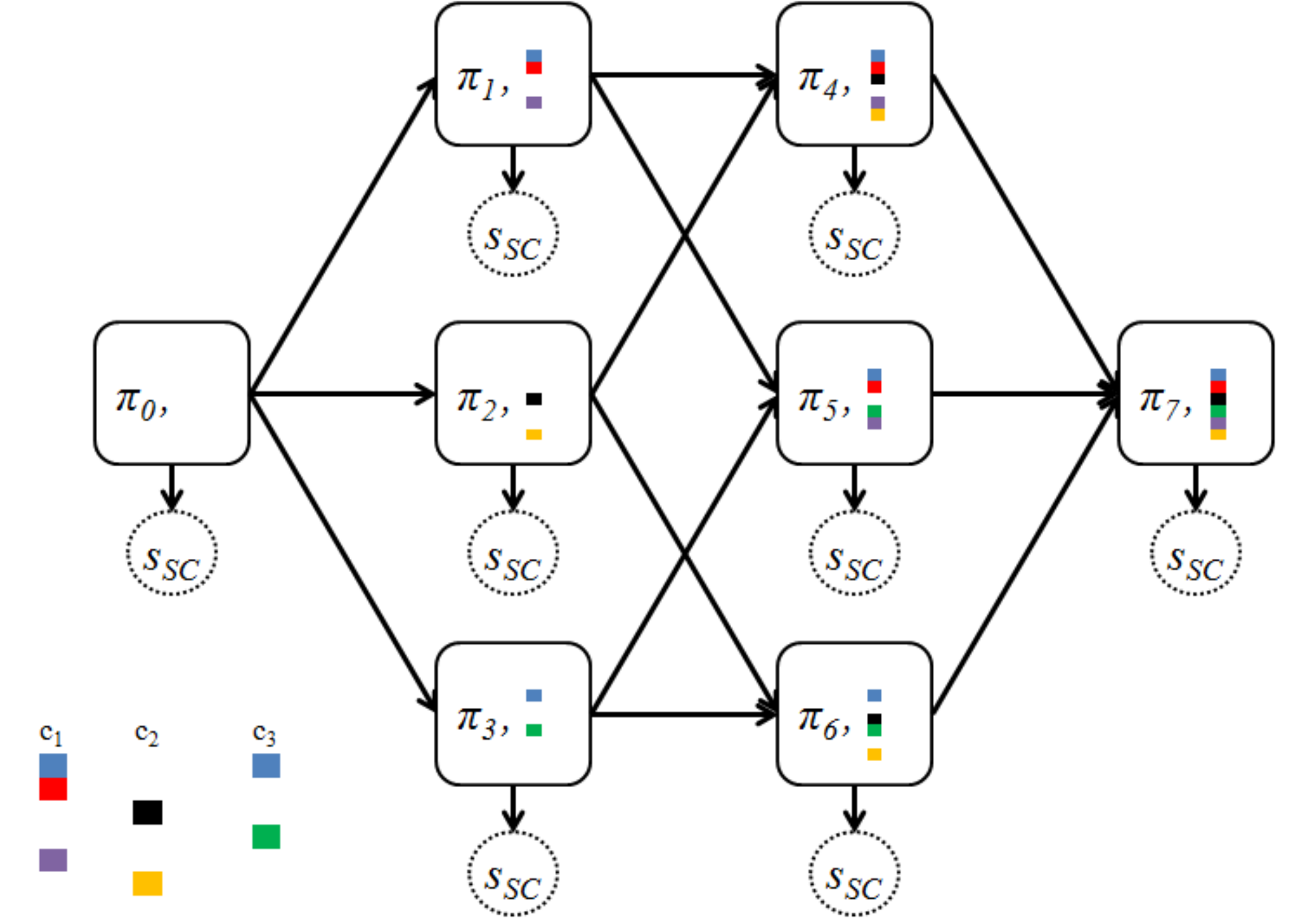}
  \end{center}
\vspace*{-12pt}
  \caption{\small An example of a DAG system using the 3 sensor subsets shown on the bottom left. The new states are the union of these sensor subsets, with the system otherwise constructed in the same fashion as the small scale system.}
  \vspace*{-10pt}
  \label{fig:DAG_subset_example}
\end{wrapfigure}

Although constructing an exhaustive DAG for data with a large number of sensors is computationally intractable, in many cases this is unnecessary. Motivated by previous methods ~\cite{gao2011active,xu2013cost,kusner2014feature}, we assume that the number of ``active'' nodes in the exhaustive graph is small, that is these nodes are either not visited by any examples or all examples that visit the node acquire the same next sensor. Equivalently, this can be viewed as the system needing only a small number of sensor subsets to classify all examples with low acquisition cost.

Rather than attempt to build the entire combinatorially sized graph, we instead use this assumption to first find these ``active'' subsets of sensors and construct a DAG to choose between unions of these subsets. The step of finding these sensor subsets can be viewed as a form of feature clustering, with a goal of grouping features that are jointly useful for classification. By doing so, the size of the DAG is reduced from exponential in the number of sensors, $2^M$, to exponential in a much smaller user chosen parameter number of subsets, $2^t$. In experimental results, we limit $t=8$, which allows for a diverse subsets of sensors to be found while preserving computational tractability and efficiency.

Our goal is to learn sensor subsets with high classification performance and low acquisition cost: subsets of sensors with empirically low cost as defined in \eqref{eq.subset_cost}. Ideally, our goal is to jointly learn the subsets which minimize the empirical risk of the entire system as defined in \eqref{eq.policy_erm}, however this presents a computationally intractable problem due to the exponential search space. Rather than attempt to solve this difficult problem directly, we minimize classification error over a collection of sensor subsets $\sigma_1,\ldots,\sigma_t$ subject to a cost constraint on the total number of sensors used. We decouple the problem from the policy learning problem by assuming that each example is classified by the best possible subset. For a constant sensor cost, the problem can be expressed as a set constraint problem:
\begin{align}\label{eq.subsets_min}
\min_{\sigma_1,\ldots,\sigma_t}\frac{1}{N}\sum_{i=1}^{N}\min_{j \in \{1,\ldots,t\}}\left[\mathbbm{1}_{f_{\sigma_j}(x_i)\neq y_i} \right] ~~
\text{such that: } \sum_{j=1}^{t}|\sigma_j|\leq \frac{B}{\delta},
\end{align}
where $B$ is the total sensor budget over all sensor subsets and $\delta$ is the cost of a single sensor.

Although minimizing this loss is still computationally intractable, consider instead the equivalent problem of maximizing the ``reward'' (the event of a correct classification) of the subsets, defined as
\begin{align}
G=\sum_{i=1}^{N}\max_{j \in \{1,\ldots,t\}}\left[\mathbbm{1}_{f_{\sigma_j}(x_i)=y_i} \right] \rightarrow \label{eq.subsets_max_reward}
\max_{\sigma_1,\ldots,\sigma_t}\frac{1}{N}G(c_1,\ldots,c_t)~~
\text{such that: } \sum_{j=1}^{t}|\sigma_j|\leq \frac{B}{\delta} .
\end{align}
This problem is related to the knapsack problem with a non-linear objective. Maximizing the reward in \eqref{eq.subsets_max_reward} is still a computationally intractable problem, however the reward function is structured to allow for efficient approximation.
\begin{lem}\label{lem:subset_submodular}
The objective of the maximization in \eqref{eq.subsets_max_reward} is sub-modular with respect to the set of subsets, such that adding any new set to the reward yields diminishing returns.
\end{lem}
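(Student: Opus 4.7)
The plan is to recognize the objective as a weighted coverage function over the ground set of all candidate sensor subsets, and then invoke the classical fact that coverage functions are submodular.

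First I would rewrite $G$ in a form that makes its coverage structure explicit. Since each summand is an indicator composed with a max, we have
\[
\max_{j \in \{1,\ldots,t\}} \mathbbm{1}_{f_{\sigma_j}(x_i) = y_i} = \mathbbm{1}_{\exists\, j:\ f_{\sigma_j}(x_i) = y_i},
\]
so if we let $\mathcal{A} = \{\sigma_1,\ldots,\sigma_t\}$ and, for each training example $i$, define the ``covering collection'' $T_i = \{\sigma : f_\sigma(x_i) = y_i\}$, then
\[
G(\mathcal{A}) \;=\; \sum_{i=1}^{N} \mathbbm{1}_{\mathcal{A}\cap T_i \neq \emptyset}.
\]
This is precisely a (uniformly) weighted coverage function over the universe of possible sensor subsets.

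Next I would verify submodularity directly from the definition. Fix $\mathcal{A} \subseteq \mathcal{B}$ and any new subset $\sigma \notin \mathcal{B}$. The marginal gain decomposes example-by-example:
\[
G(\mathcal{A}\cup\{\sigma\}) - G(\mathcal{A}) \;=\; \sum_{i=1}^{N} \mathbbm{1}_{\sigma \in T_i}\,\mathbbm{1}_{\mathcal{A}\cap T_i = \emptyset},
\]
and analogously for $\mathcal{B}$. Because $\mathcal{A} \subseteq \mathcal{B}$, the event $\{\mathcal{B}\cap T_i = \emptyset\}$ implies $\{\mathcal{A}\cap T_i = \emptyset\}$, so termwise
\[
\mathbbm{1}_{\sigma \in T_i}\,\mathbbm{1}_{\mathcal{B}\cap T_i = \emptyset} \;\leq\; \mathbbm{1}_{\sigma \in T_i}\,\mathbbm{1}_{\mathcal{A}\cap T_i = \emptyset}.
\]
Summing over $i$ gives $G(\mathcal{A}\cup\{\sigma\}) - G(\mathcal{A}) \geq G(\mathcal{B}\cup\{\sigma\}) - G(\mathcal{B})$, which is the diminishing-returns property. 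Monotonicity is immediate since each indicator is non-decreasing in $\mathcal{A}$.

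There is essentially no obstacle here beyond setting up the right notation: the main conceptual move is recognizing that the $\max$ of $0/1$ indicators is an ``at least one'' indicator, which turns $G$ into a coverage function; everything else is a one-line termwise argument. The only care required is to be clear that submodularity is claimed with respect to the set of subsets $\mathcal{A}$ (the ground set being all candidate sensor subsets), not with respect to individual sensors within a single $\sigma_j$, since the latter would generally be false.
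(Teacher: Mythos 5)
Your proof is correct and follows the same underlying idea as the paper, which simply asserts that ``maximization over a set of objects is a submodular function''; you have just spelled that assertion out by noting that a max of $0/1$ indicators is a coverage function and verifying diminishing returns termwise. Your closing caveat---that submodularity here is with respect to the collection of candidate subsets, not the sensors inside a single $\sigma_j$---is a useful clarification that the paper leaves implicit (and only addresses later, in the theorem that follows).
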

\begin{proof}
This follows directly from the fact that maximization over a set of objects is a submodular function.
\end{proof}
\begin{thm}
Given that the empirical risk of each classifier $f_{\sigma_k}$ is submodular and monotonically decreasing w.r.t. the elements in $\sigma_k$ and uniform sensor costs, the strategy in Alg. \ref{greedy_subset_algorithm} is an $O(1)$ approximation of the optimal reward in \eqref{eq.subsets_max_reward}.
\end{thm}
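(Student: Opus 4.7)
The plan is to reduce the theorem to the classical Nemhauser--Wolsey--Fisher guarantee for monotone submodular maximization under a cardinality constraint. Concretely, I would recast \eqref{eq.subsets_max_reward} as a selection problem over a ground set of (subset-index, sensor-index) pairs $E = \{(j,m) : j \in \{1,\ldots,t\}, m \in \{1,\ldots,M\}\}$, where choosing $(j,m)$ means adding sensor $m$ to $\sigma_j$. With uniform sensor cost $\delta$, the budget constraint $\sum_j |\sigma_j| \le B/\delta$ becomes a pure cardinality constraint on the chosen subset of $E$, so the problem is of the exact form covered by the NWF bound.

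Next I would verify the two hypotheses the NWF theorem needs on the objective $G$, viewed as a set function on $E$. \emph{Monotonicity:} adding any $(j,m)$ only enlarges some $\sigma_j$, and since each per-example reward $\mathbbm{1}_{f_{\sigma_j}(x_i)=y_i}$ is monotone non-decreasing in $\sigma_j$ by hypothesis, the maximum over $j$ can only grow, hence $G$ is monotone. \emph{Submodularity:} because the $\max$ of submodular functions is itself submodular (pointwise, for each training example the indicator reward has diminishing returns in $\sigma_j$ by assumption, and a maximum of monotone submodular functions remains submodular), the sum over examples inherits submodularity. Lemma \ref{lem:subset_submodular} gives submodularity across subsets; combined with the per-classifier submodularity assumption, one obtains submodularity across the joint ground set $E$.

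With monotone submodularity and a cardinality constraint, the greedy algorithm of Alg.~\ref{greedy_subset_algorithm}, which at each step adds the $(j,m) \in E$ maximizing the marginal gain $G(S \cup \{(j,m)\}) - G(S)$ until $B/\delta$ sensors have been placed, achieves $G(\text{greedy}) \ge (1 - 1/e)\, G(\text{OPT})$. Since $1 - 1/e$ is a universal positive constant independent of the problem size, this establishes the $O(1)$ approximation claim.

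The main obstacle is the submodularity verification: the lemma only addresses submodularity over the collection of subsets, but the constraint and the greedy step operate at the granularity of individual sensors inside those subsets, so one has to combine both assumed forms of submodularity (per-classifier in sensors, across subsets in the max) into a single statement about $G$ as a function on $E$. A clean way to handle this is pointwise per training example, upgrading each indicator term and then summing; the sum preserves submodularity and monotonicity, after which NWF applies directly.
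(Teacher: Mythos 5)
Your overall skeleton matches the paper's: recast \eqref{eq.subsets_max_reward} as monotone submodular maximization over the ground set of (subset-index, sensor-index) pairs under a cardinality constraint $B/\delta$, and invoke the Nemhauser--Wolsey--Fisher $1-\tfrac{1}{e}$ guarantee for greedy. The monotonicity verification is also the same as the paper's. The gap is in the submodularity step. The claim you lean on --- that a maximum of monotone submodular functions is itself submodular --- is false in general (e.g.\ on ground set $\{1,2,3\}$ take $f_1(S)=|S\cap\{1,2\}|$ and $f_2(S)=1.5\cdot\mathbbm{1}_{3\in S}$; the marginal value of element $1$ under $g=\max(f_1,f_2)$ is $0$ at $\{3\}$ but $0.5$ at $\{2,3\}$, violating diminishing returns). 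What \emph{is} true, and what Lemma \ref{lem:subset_submodular} actually asserts, is the facility-location fact: $\sum_i \max_{j\in S} r_{ij}$ is submodular in the \emph{collection} $S$ of subsets, i.e.\ with respect to adding whole new subsets to the pool. You correctly flag that bridging from subset-level submodularity to sensor-level submodularity is ``the main obstacle,'' but your proposed resolution (``upgrading each indicator term pointwise and then summing'') is exactly the step that needs an argument and is not supplied.

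The paper closes this gap with a specific trick you are missing: by monotonicity of the per-classifier reward, for any training point the reward under $\sigma_j$ is dominated by the reward under $\sigma_j\cup\{k\}$, so
\begin{align*}
G(\sigma_1,\ldots,\sigma_{j-1},\sigma_j\cup\{k\},\ldots,\sigma_t)=G(\sigma_1,\ldots,\sigma_j,\;\sigma_j\cup\{k\},\ldots,\sigma_t),
\end{align*}
i.e.\ replacing $\sigma_j$ by $\sigma_j\cup\{k\}$ gives the same objective as keeping $\sigma_j$ \emph{and} adjoining $\sigma_j\cup\{k\}$ as an entirely new member of the collection. Every sensor-level increment is thereby simulated by a subset-level increment, so the subset-level submodularity of Lemma \ref{lem:subset_submodular} transfers to $G$ viewed on your ground set of (subset, sensor) pairs, after which the NWF bound applies as you describe. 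Without this (or an equivalent) reduction, your submodularity verification does not go through.
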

\begin{proof}
Consider adding a sensor $k$ to any subset $\sigma_j$. By assumption, the empirical risk of each classifier is monotonically decreasing and therefore the reward is monotonically increasing. Additionally, note that the reward for any training point $x_i$ using $\sigma_j$ is less than the reward from using $\sigma_j \cup k$ and therefore the objective is equal to the objective without replacement of $\sigma_j$ by $\sigma_j \cup k$:
$$
G(c_1,\ldots,c_{j-1},c_j\cup k,\ldots,c_K)=G(c_1,\ldots,c_j,c_j\cup k,\ldots,c_K).
$$
As a result, we can view adding a sensor to a subset as adding an entirely new subset without changing the objective in \eqref{eq.subsets_max_reward}. From the above lemma, adding a new subset results in a submodular function, and therefore the reward in \eqref{eq.subsets_max_reward} is submodular with respect to adding sensors to each subset. Applying a greedy strategy therefore yields a $1-\frac{1}{e}$ approximation of the optimal strategy \cite{nemhauser1978analysis}.
\end{proof}


\begin{wrapfigure}{R}{0.57\linewidth}
\begin{minipage}{\linewidth}
\vspace*{-55pt}
\begin{algorithm}[H]
    \begin{algorithmic}
        \STATE {\bfseries Input:} Number of Subsets $t$, Cost Constraint $\frac{B}{\delta}$
        \STATE {\bfseries Output:} Feature subsets, $\sigma_1,\ldots,\sigma_t$
        \STATE {\bfseries Initialize:} $\sigma_1,\ldots,\sigma_t=\emptyset$
        \STATE $\left(i,j\right)=\argmax_{i \in \{1,\ldots,t\}}\argmax_{j \in \sigma_i^C}G(\sigma_1,...,\sigma_i\cup j,...,\sigma_t)$
        \WHILE{$\sum_{j=1}^{T}|\sigma_j|\leq \frac{C}{\delta}$}
            \STATE $\sigma_i=\sigma_i\cup j$
            \STATE $\left(i,j\right)=\argmax_{i \in \{1,\ldots,t\}}\argmax_{j \in \sigma_i^C}G(\sigma_1,...,\sigma_i\cup j,...,\sigma_t)$
        \ENDWHILE
    \end{algorithmic}
\caption{Sensor Subset Selection}
\label{greedy_subset_algorithm}
\end{algorithm}
\end{minipage}
\vspace*{-10pt}
\end{wrapfigure}

\subsection{Constructing DAG using Sensor Subsets}
Alg. \ref{greedy_subset_algorithm} requires computation of the reward $G$ for only $O\left(\frac{B}{\delta}tM\right)$ sensor subsets, where $M$ is the number of sensors, to return a constant-order approximation to the NP-hard knapsack-type problem. Given the set of sensor subsets $\sigma_1,\ldots,\sigma_t$, we can now construct a DAG using all possible unions of these subsets, where each sensor subset $\sigma_j$ is treated as a new single sensor, and apply the small scale system presented in Sec.~\ref{section.dag}. The result is an efficiently learned system with relatively low complexity yet strong performance/cost trade-off. Additionally, this result can be extended to the case of non-uniform costs, where a simple extension of the greedy algorithm yields a constant-order approximation \cite{Leskovec_Krause}.

A simple case where three subsets are used is shown in Fig. \ref{fig:DAG_subset_example}. The three learned subsets of sensors are shown on the bottom left of Fig. \ref{fig:DAG_subset_example}, and these three subsets are then used to construct the entire DAG in the same fashion as in Fig. \ref{fig:DAG_example}. At each stage, the state is represented by the union of sensor subsets acquired. Grouping the sensors in this fashion reduces the size of the graph to $8$ nodes as opposed to $64$ nodes required if any subset of the $6$ sensors can be selected. This approach allows us to map high-dimensional adaptive sensor selection problems to small scale DAG  in Sec. 2.

\section{Experimental Results}\label{section.exp}
To demonstrate the performance of our DAG sensor acquisition system, we provide experimental results on data sets previously used in budgeted learning. Three data sets previously used for budget cascades \cite{trapeznikov:2013b,wang2014lp} are tested. In these data sets, examples are composed of a small number of sensors (under 4 sensors). To accurately compare performance, we apply the LP approach to learning sensor trees \cite{wang2014model} and construct trees containing all subsets of sensors as opposed to the fixed order cascades previously applied \cite{trapeznikov:2013b,wang2014lp}.

Next, we examine performance of the DAG system using 3 higher dimensional sets of data previously used to compare budgeted learning performance \cite{kusner2014feature}. In these cases, the dimensionality of the data (between 50 and 400 features) makes exhaustive subset construction computationally infeasible. We greedily construct sensor subsets using Alg. \ref{greedy_subset_algorithm}, then learn a DAG over all unions of these sensor subsets. We compare performance with CSTC \cite{xu2013cost} and ASTC \cite{kusner2014feature}.

For all experiments, we use cost sensitive filter trees \cite{langford2007filtertree}, where each binary classifier in the tree is learned using logistic regression. Homogeneous polynomials are used as decision functions in the filter trees. For all experiments, uniform sensor cost were were varied in the range $[0,M]$ achieve systems with different budgets. Performance between the systems is compared by plotting the average number of features acquired during test-time vs. the average test error. 
\subsection{Small Sensor Set Experiments}
\begin{figure}[htb!]
\vspace*{-10pt}
\centering{
\footnotesize
\begin{tabular}{|c||c|c|c|c|c|c|c|}
\hline
{\bf Dataset} & Classes & Training Size & Test Size & Stage 1 & Stage 2 & Stage 3 & Stage 4\\
\hline
\hline
letter & 26 & 16000 & 4000 & Pixel Count & Moments & Edge Features & -\\
\hline
pima & 2 & 768 & - & Weight, Age,\ldots & Glucose & Insulin & -\\
\hline
landsat & 6 & 4435 & 2000 & Band 1 & Band 2 & Band 3 & Band 4\\
\hline
\end{tabular}}\vspace*{-5pt}\\
\subfigure[letter]{\includegraphics[trim=40mm 80mm 40mm 90mm,clip,width=.32\linewidth]{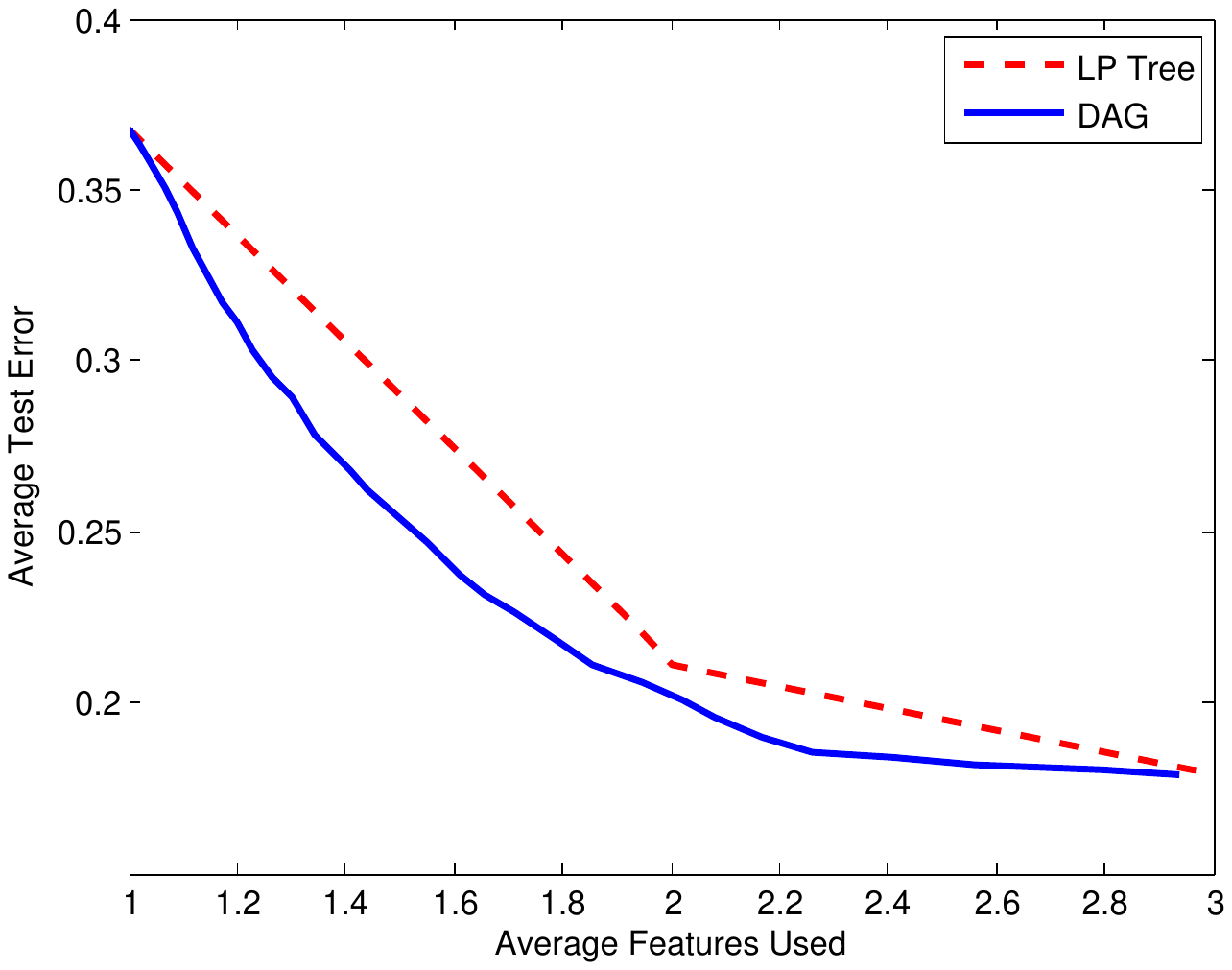}}
\subfigure[pima]{\includegraphics[trim=40mm 80mm 40mm 90mm,clip,width=.32\linewidth]{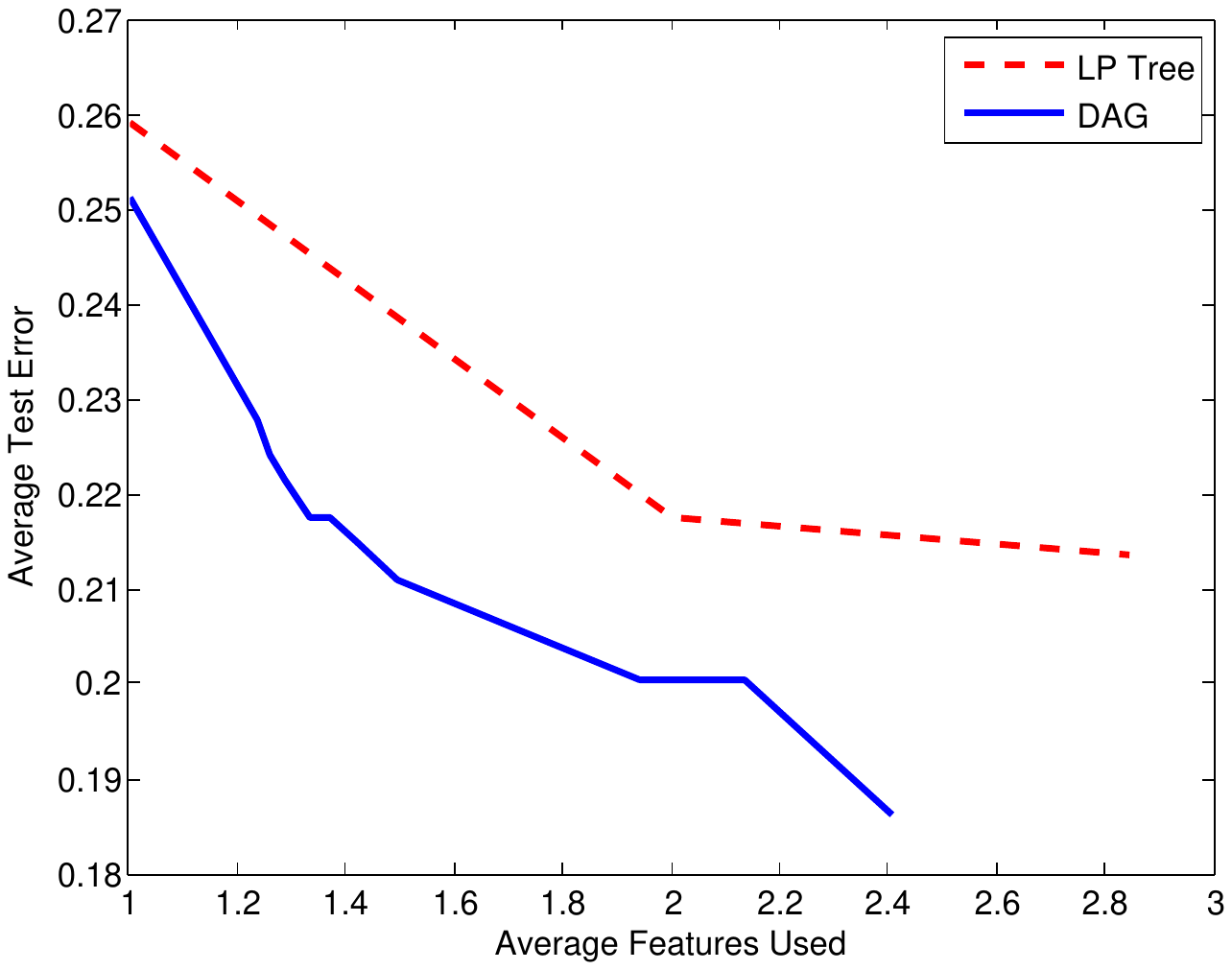}}
\subfigure[satimage]{\includegraphics[trim=40mm 80mm 40mm 90mm,clip,width=.32\linewidth]{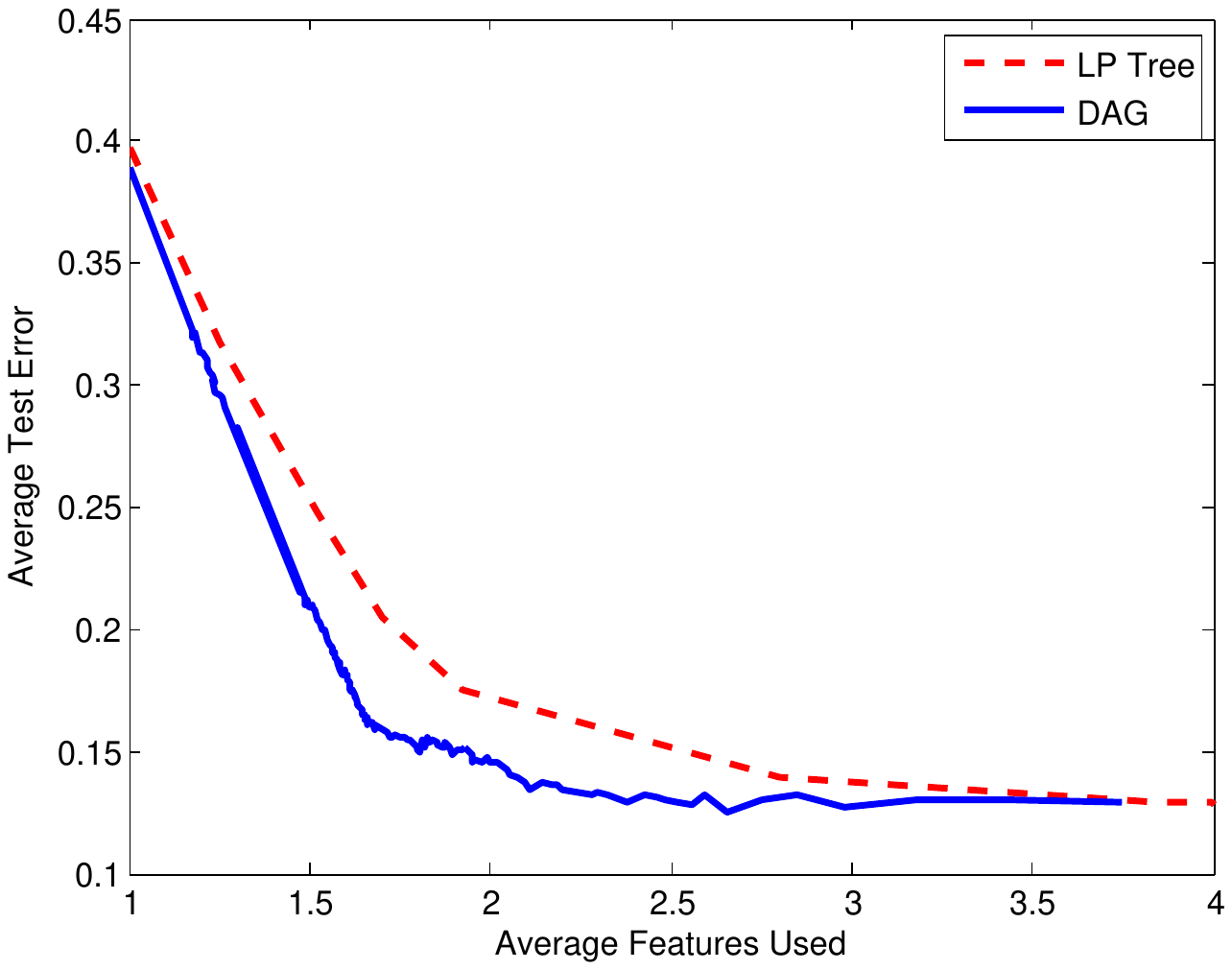}}
\vspace*{-11pt}
\caption[Comparison of LP tree and budget DAG.]{\small Average number of sensors acquired vs. average test error comparison between LP tree systems and DAG systems.}
\label{fig:DAG_perf}
\vspace*{-11pt}
\end{figure}
We compare performance of our trained DAG with that of a complete tree trained using an LP surrogate \cite{wang2014model} on the landsat, pima, and letter datasets. To construct each sensor DAG, we include all subsets of sensors (including the empty set) and connect any two nodes differing by a single sensor, with the edge directed from the smaller sensor subset to the larger sensor subset. By including the empty set, no initial sensor needs to be selected. $3^{rd}$-order homogeneous polynomials are used for both the classification and system functions in the LP and DAG.

As seen in Fig. \ref{fig:DAG_perf}, the systems learned with a DAG outperform the LP tree systems. Additionally, the performance of both of the systems is significantly better than previously reported performance on these data sets for budget cascades \cite{trapeznikov:2013b,wang2014lp}. This arises due to both the higher complexity of the classifiers and decision functions as well as the flexibility of sensor acquisition order in the DAG and LP tree compared to cascade structures. For this setting, it appears that the DAG approach is superior approach to LP trees for learning budgeted systems. 

\subsection{Large Sensor Set Experiments}
\begin{figure}[htb!]
\vspace*{-20pt}
\centering
\subfigure[MiniBooNE]{\includegraphics[trim=40mm 80mm 40mm 90mm,clip,width=.32\linewidth]{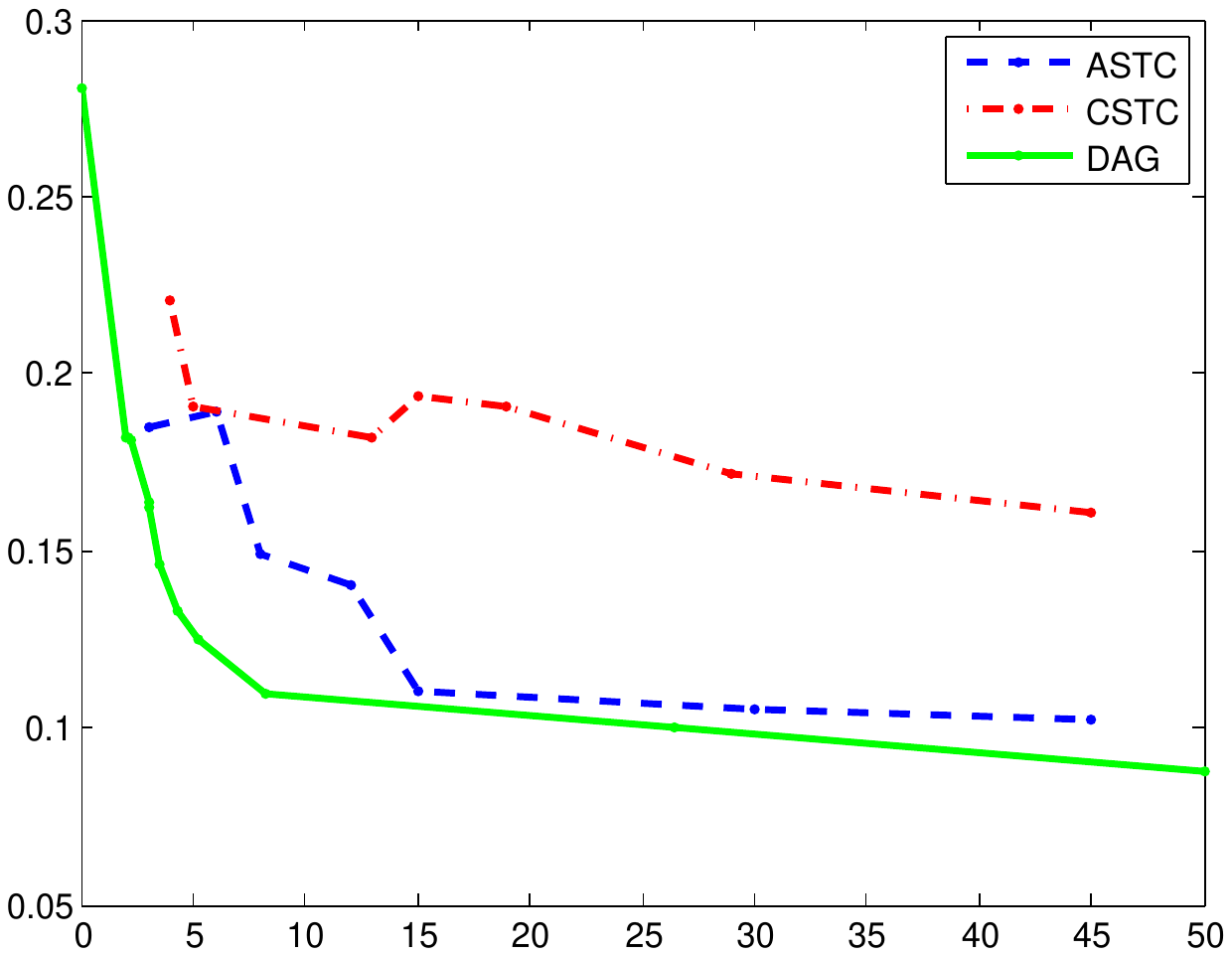}}
\subfigure[Forest]{\includegraphics[trim=40mm 80mm 40mm 90mm,clip,width=.32\linewidth]{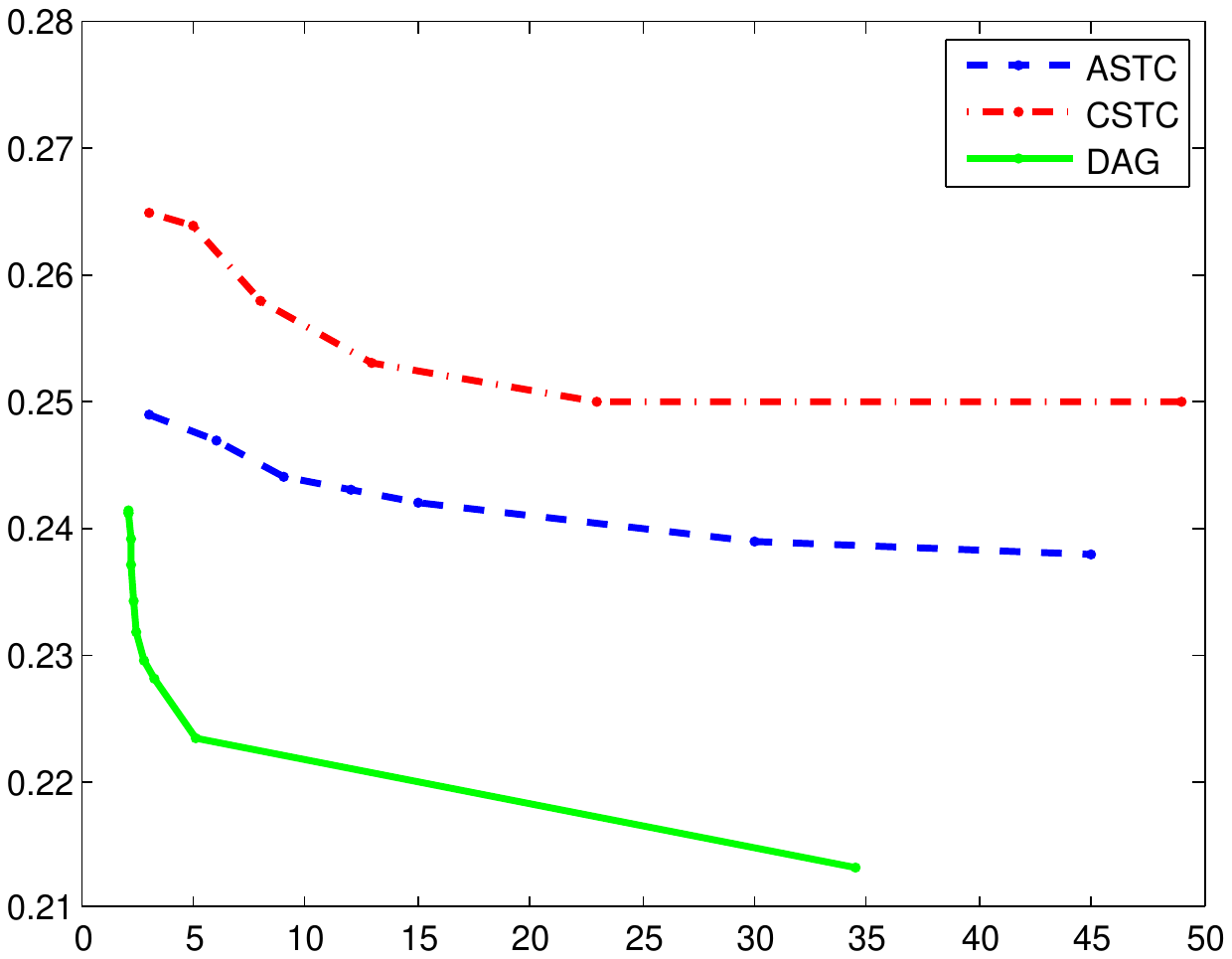}}
\subfigure[CIFAR]{\includegraphics[trim=40mm 80mm 40mm 90mm,clip,width=.32\linewidth]{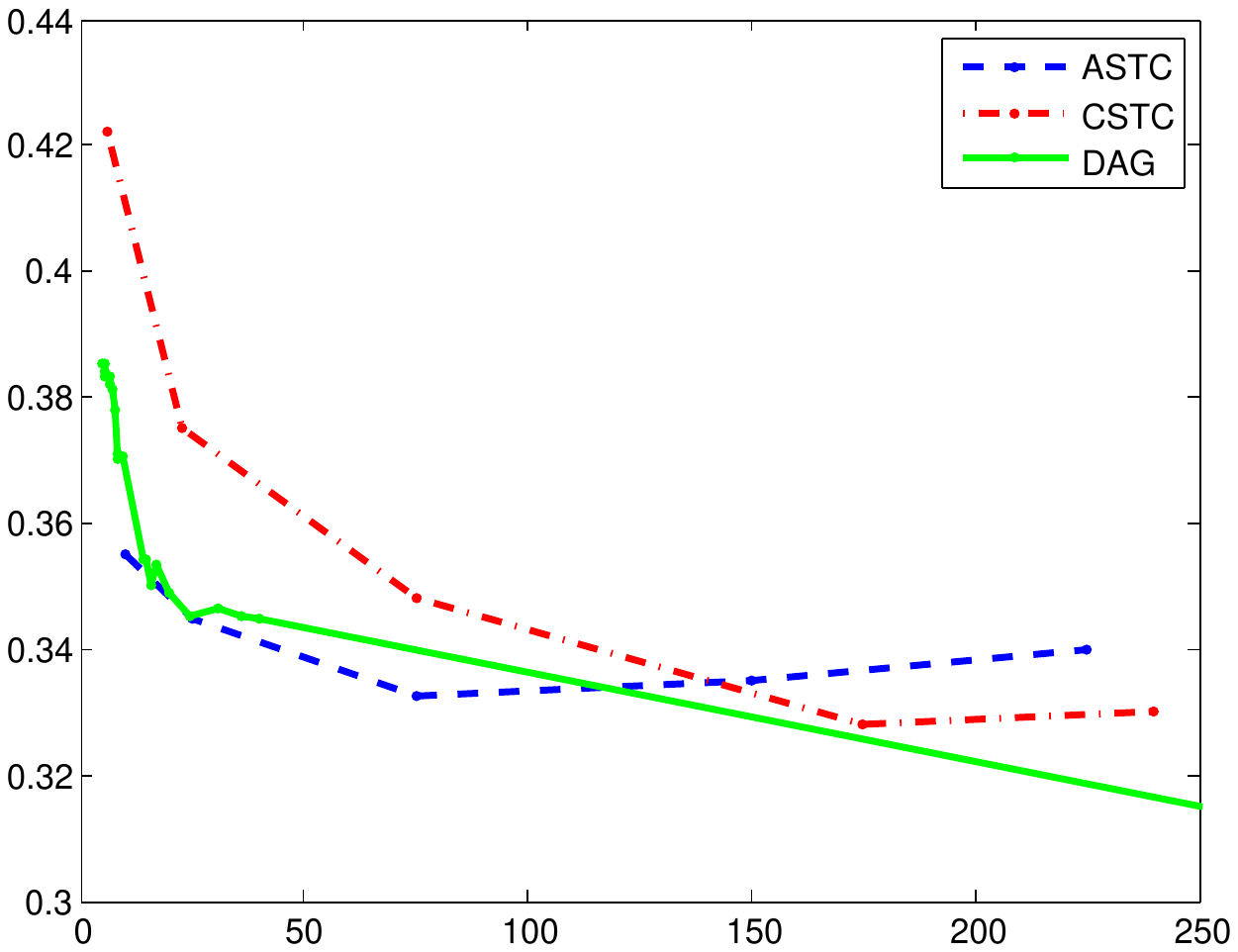}}
\vspace*{-11pt}
\caption{\small Comparison between CSTC, ASTC, and DAG of the average number of acquired features (x-axis) vs. test error (y-axis).}
\label{fig:big_DAG_perf}
\vspace*{-6pt}
\end{figure}
Next, we compare performance of our trained DAG with that of CSTC \cite{xu2013cost} and ASTC \cite{kusner2014feature} for the MiniBooNE, Forest, and CIFAR datasets. We use the validation data to find the homogeneous polynomial that gives the best classification performance using all features (MiniBooNE: linear, Forest: $2^{nd}$ order, CIFAR: $3^{rd}$ order). These polynomial functions are then used for all classification and policy functions. For each data set, Alg. \ref{greedy_subset_algorithm} was used to find 7 subsets, with an $8^{th}$ subset of all features added. An exhaustive DAG was trained over all unions of these 8 subsets.

Fig. \ref{fig:big_DAG_perf} shows performance comparing the average cost vs. average error of CSTC, ASTC, and our DAG system. The systems learned with a DAG outperform both CSTC and ASTC on the MiniBooNE and Forest data sets, with comparable performance on CIFAR at low budgets and superior performance at higher budgets.


%
\bibliography{DAG_bib}
{\small
\bibliographystyle{abbrv}}

\end{document}